\newif\ifacm
\newtheorem{definition}[]{Definition}
\newtheorem{proposition}{Proposition}
\newcommand{\vecV}{\bm{V}}
\newcommand{\vecv}{\bm{v}}
\newcommand{\vecW}{\bm{W}}
\newcommand{\vecw}{\bm{w}}
\newcommand{\vecU}{\bm{U}}
\newcommand{\vecu}{\bm{u}}
\newcommand{\vecF}{\bm{F}}
\newcommand{\vecZ}{\bm{Z}}
\newcommand{\vecz}{\bm{z}}
\newcommand{\vecS}{\bm{S}}
\newcommand{\vecs}{\bm{s}}
\newcommand{\vecX}{\bm{X}}
\newcommand{\vecx}{\bm{x}}
\newcommand{\actcause}{{\textit{CF-Shapley}}\xspace}
\newcommand{\ad}{{ad demand}\xspace}
\newcommand{\qv}{{query volume}\xspace}
\newcommand{\vol}{{qv}}
\newcommand{\dem}{{ad}}
\newcommand{\den}{{den}}
\newcommand{\supdelta}{{\texttt{QVolumeDelta}}\xspace}
\newcommand{\demdelta}{{\texttt{AdDemandDelta}}\xspace}
\newcommand{\proddelta}{{\texttt{ProductDelta}}\xspace}
\newcommand{\stdshapley}{{\texttt{Shapley}}\xspace}
\newcommand{\doshapley}{{\texttt{DoShapley}}\xspace}
\newtheorem*{axiom*}{Axioms}
\author{%
Amit Sharma \\
Microsoft Research \\
\texttt{amshar@microsoft.com} \\
\And
Hua Li \\
Microsoft Bing Ads \\
\texttt{huli@microsoft.com}\\
\And
Jian Jiao \\
Microsoft Bing Ads \\
\texttt{jian.jiao@microsoft.com}
}
\begin{document}

\title{The Counterfactual-Shapley Value: Attributing Change in System Metrics}

\ifacm
\author{Amit Sharma}
\email{amshar@microsoft.com}
\affiliation{
 \institution{Microsoft Research}
 \city{Bengaluru}
 \country{India}
}
\author{Hua Li}
\email{huli@microsoft.com}
\affiliation{
  \institution{Microsoft Bing  Ads}
  \city{Redmond}
  \country{USA}
}
\author{Jian Jiao}
\email{jian.jiao@microsoft.com}
\affiliation{
\institution{Microsoft Bing Ads}
 \city{Redmond}
  \country{USA}
}
\fi

\ifacm
    \renewcommand{\shortauthors}{Sharma, et al.}
\else
\maketitle
\fi

\begin{abstract} 
Given an unexpected change in the output metric 
of a large-scale system, it is important to answer \textit{why} the change occurred: which inputs caused the change in metric? A key component  of such an attribution question is estimating the \textit{counterfactual}: the  (hypothetical) change in the system metric due to a specified change in a single input.  However, due to inherent stochasticity and complex interactions between parts of the system, it is difficult to model an output metric directly. 
We utilize the computational structure of a system to break up the modelling task into sub-parts, such that each sub-part corresponds to a more stable mechanism that can be modelled accurately over time. Using the system's structure also helps to view the metric as a computation over a structural causal model (SCM),  thus providing a principled way to estimate counterfactuals. Specifically, we propose a method to estimate counterfactuals 
using time-series predictive models and construct an attribution score, \actcause, that is consistent with desirable axioms for attributing an observed change in the output metric. Unlike past work on causal shapley values, our proposed method
can attribute a single observed change in output (rather than a population-level effect) and thus provides more accurate attribution scores when evaluated on simulated datasets.  As a real-world application, we  analyze 
  a query-ad matching system with the goal of attributing observed change in a metric for  ad matching density. 
  Attribution scores explain how \qv and \ad  from different query categories affect the ad matching density, leading to actionable insights and
  uncovering the role of external events (e.g., ``Cheetah Day'') in driving the matching density.
\end{abstract}



\ifacm
    \maketitle
\fi

\section{Introduction}
In large-scale systems, a common problem is to explain the reasons for a change in the output, especially for unexpected and big changes. Explaining the reasons or attributing the change to input factors can help isolate the cause and debug it if the change is undesirable, or suggest ways to amplify the change if desirable. For example, in a distributed system, system failure~\cite{zhang2019inflection} or performance anomaly~\cite{nagaraj2012structured,attariyan2012xray} are important undesirable outcomes.  In online platforms such as e-commerce websites or search websites, a desirable outcome is increase in revenue and it is important to understand why the revenue increased or  decreased~\cite{singal2022shapley,dalessandro2012causally}. 

Technically, this problem can be framed as an attribution problem~\cite{efron2020prediction,yamamoto2012understanding, dalessandro2012causally}. Given a set of candidate factors, which of them can best explain the observed change in output? Methods include statistical analysis based on conditional probabilities~\cite{berman2018beyondmta,ji2016probabilistic-mta,shao2011mta} or computation of game-theoretic attribution scores like Shapley value~\cite{lundberg2017unified,singal2022shapley,dalessandro2012causally}. However, most past work assumes that  the output can be written as a function of the inputs, ignoring any structure in the computation of the output. 

\begin{figure}
    \centering
    \includegraphics[width=0.8\columnwidth]{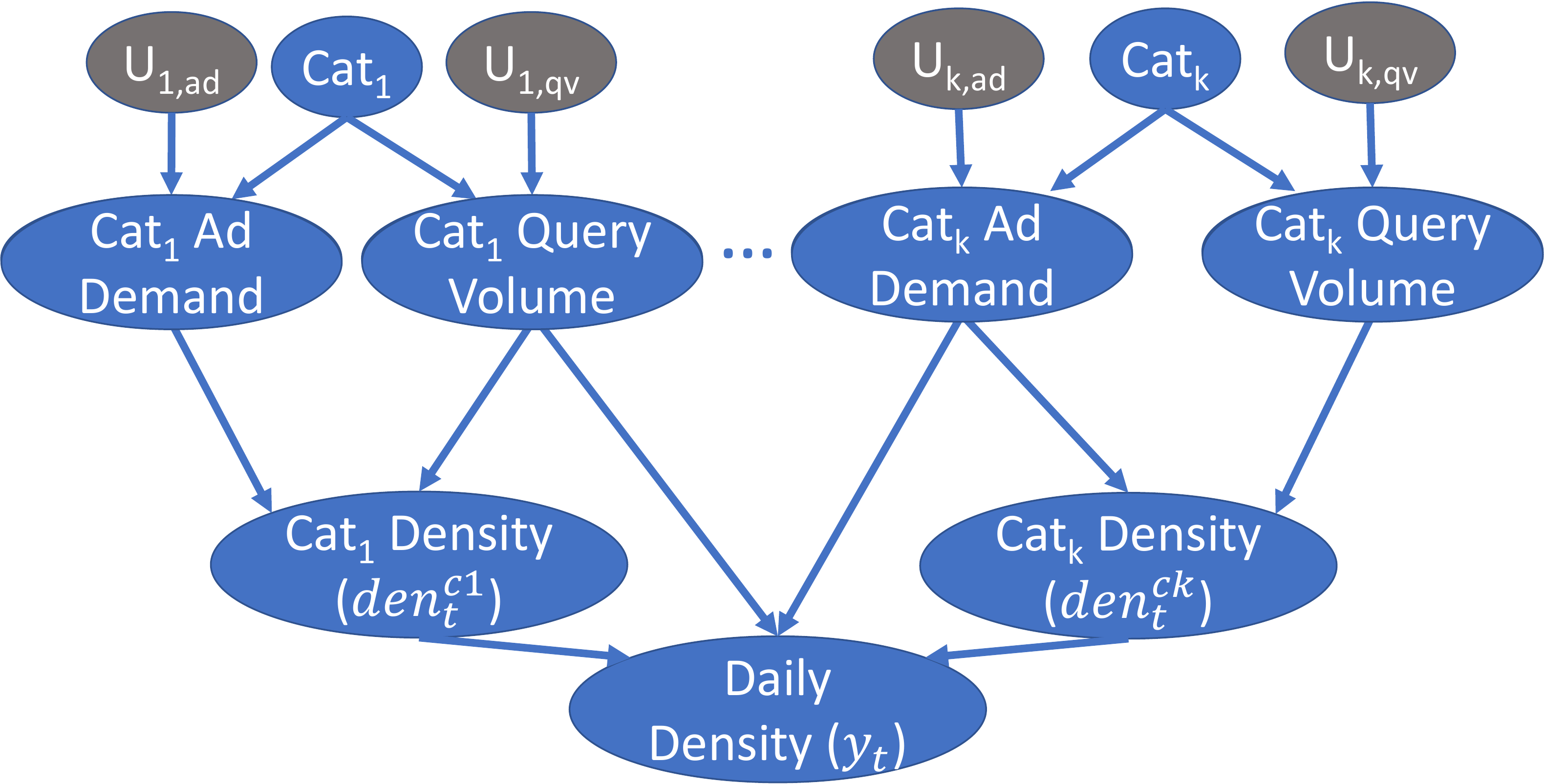}
    \caption{Causal graph for an ad matching system that reflects computation of the matching density metric. For each query category, the number of queries (query volume) and ads (ad demand) determine the categorical density for each day. Different categorical densities combine to yield the daily density. Goal is to attribute daily density to ad demand and query volume of different categories. The category density is directly affected by category-wise ad demand and query volume  and thus has a relatively more stable relationship with the inputs than overall daily density. 
     }
    \label{fig:causal-graph}
    \vspace{-3em}
\end{figure}

In this paper, we consider large-scale systems such as search or ad systems where output metrics are aggregated over different kinds of inputs or composed over multiple pipeline stages,  leading to a natural computational structure (instead of a single function of the inputs). For example, in an ad system, the number of ads that are matched per query is a composite measure that is composed of an analogous metric over each query category (see Figure~\ref{fig:causal-graph}). While the overall matching density may fluctuate, the matching density per category is expected to be more stably associated with the input queries and ads. As another example, the output metric may be a result of a series of modules in a pipeline, e.g., recommendations that are finally shown to a user may be a result of multiple  pipeline stages where each stage filters some items. Our key insight is that utilizing the computational structure of a real-world system can break up the system into smaller sub-parts that stay stable over time and thus  can be modelled accurately. In other words, the system's computation can be modelled as a set of independent, causal mechanisms~\cite{peters2017elements} over a structural causal model (SCM)~\cite{pearl2009book}. 

Modeling the system's computation as a SCM also provides a principled way to define an  attribution score. Specifically, we show that attribution can be defined in terms of \textit{counterfactuals} on the SCM. Following recent work on causal shapley values~\cite{heskes2020causal,pmlr-v162-jung22a}, we posit four axioms that any desirable attribution method for an output metric should satisfy. We then propose a counterfactual variant of the Shapley value that satisfies all these properties. Thus, given the computational structure, our proposed \actcause method has the following steps: \textbf{1)}  utilize machine learning algorithms to  fit the SCM and compute counterfactual values of the metric under any input,  and \textbf{2) } use the estimated counterfactuals to construct an attribution score to rank the contribution of different inputs. On simulated data, our results  show that the proposed method is significantly more accurate for explaining inputs' contribution to an observed change in a system metric, compared to Shapley value~\cite{lundberg2017unified} or its recent causal variants~\cite{heskes2020causal,pmlr-v162-jung22a}. 


We apply the proposed method, \actcause attribution, to a large-scale ad matching system that outputs  relevant ads for each search query issued by a user. The key outcome is \textit{matching density}, the number of ads matched per query. This density is roughly proportional to revenue generated, since only the queries for which ads are selected contribute to revenue. There are two main causes for a change in matching density: change in query volume or change in demand from advertisers. Given that queries are typically organized by categories, the attribution problem is to understand which of these two are driving an observed change in matching density, and from which categories. 

To do so, we construct a causal graph representing the system's computation pipeline (Figure~\ref{fig:causal-graph}).  Given six months of  system's log data, we repurpose time-series prediction models to learn the structural equation for category-wise density as a function of query volume and ad demand, its parents in the graph. 
For this system, we find that category-wise attribution is possible with minimal assumptions, while attribution between query volume and ad demand requires knowledge of the structural equations that generate category-wise density. 
In both cases, we show how the \actcause method can be used to estimate the system's  counterfactual outputs and the resultant attribution scores. 
As a sanity check,  \actcause attribution scores satisfy the \textit{efficiency} property for attributing the matching density metric: their sum matches the observed change in density. We then use \actcause scores to explain density changes on five outlier days from November to December 2021, uncovering insights on how changes in \qv or \ad  for different categories affects the density metric. We validate the results through an analysis of external events during the time period.

To summarize, our contributions include, 
\begin{itemize}
    \item A method for attributing  metrics in a large-scale system utilizing its computational structure as a causal graph, that outperforms recent Shapley value-based methods on simulated data.
    \item A case study on estimating counterfactuals in a real-world ad matching system, providing a principled way for attributing change in its output metric.
\end{itemize}

\section{Related Work} 
Our work considers a causal interpretation of the attribution problem. Unlike attribution methods on predictions from a (deterministic) machine learning model~\cite{lundberg2017unified,ide2021anomaly}, here we are interested in attributing real-world outcomes where the data-generating process includes noise.  Since the attribution problem concerns explaining a single outcome or event, we focus on causality on individual outcomes~\cite{halpern2016book} rather than \textit{general} causality that deals with the average effect of a cause on the outcome over a (sub)population~\cite{pearl2009book}. In other words, we are interested in estimating the counterfactual, given that we already have an observed event. Counterfactuals are the hardest problem in Pearl's ladder of causation, compared to observation and intervention~\cite{pearl2019seven}.  

While counterfactuals have been applied in feature attribution for machine learning models~\cite{kommiya2021towards,verma2020counterfactual}, less work has been done for attributing real-world outcomes in systems using formal counterfactuals. Recent work uses the \textit{do}-intervention to propose do-shapley values~\cite{heskes2020causal,pmlr-v162-jung22a} that attribute the interventional quantity $P(Y|do(\vecV))$ across different inputs $v \in \vecV$. While do-shapley values are useful for calculating the average effect of different inputs on the output $Y$, they are not applicable for attributing an \textit{individual} change in the  output. For attributing individual changes,  \cite{janzing2019causaloutliers} analyze root cause identification for outliers in a structural causal model, and find that attribution conditional on the parents of a node is more effective than global attribution. They quantify the attribution using information theoretic scores, but do not provide any axiomatic characterization of the resulting attribution score.
In this work, we propose four axioms that characterize desirable properties for an attribution score for explaining individual change in output and present the CF-Shapley value that satisfies those axioms.

\noindent \textbf{Attribution in ad systems. }
Multi-touch attribution is the most common attribution problem studied in online ad systems. Given an ad click, the goal is to assign credit to the different preceding exposures of the same item to the user, e.g., previous ad exposures, emails, or other media. Multiple methods have been proposed to estimate the attribution such as attributing all to the last exposure~\cite{berman2018beyondmta}, an average over all exposures, or using probabilistic models to model the click data as a function of the input exposures~\cite{shao2011mta,ji2016probabilistic-mta}. Recent methods utilize the game-theoretic attribution score using Shapley values that summarizes the attribution over multiple simulations of input variables, with~\cite{dalessandro2012causally} or without a causal interpretation~\cite{singal2022shapley}. 
Multi-touch attribution can be considered as a one-level SCM problem, where there is an output node being affected by all input nodes. It does not cover more complex systems where there is a computational structure.

\noindent \textbf{Performance Anomaly Attribution. }
Computational structure (e.g., specific system capabilities or logs) has been considered in the systems literature to root-cause performance anomalies~\cite{attariyan2012xray} or system failures~\cite{zhang2019inflection}. Some methods use causal reasoning to motivate their attribution algorithm, but they do so informally. 
Our work provides a formal analysis of the system attribution problem.

\section{Defining the attribution problem}
For a system's outcome metric $Y$, let $Y=y_t$ be a value that needs to be explained (e.g., an extreme value). Our goal is to explain the value by \textit{attributing} it to a set of input variables, $\vecX$.  Can we rank the  variables by their contribution in \textit{causing} the outcome? 

For example, consider a system that crashes whenever its load crosses 0.9 units. The system's crash metric can be described by the following structural equations, $Y=I_{Load>=0.9}; Load=0.5X_1+ 0.4X_2+0.9X_3; X_i= Bernoulli(0.5) \forall i$. The corresponding graph for the system has the following edges: ${X_1, X_2, X_3} \rightarrow Load; Load \rightarrow Y$. The value of each input $X_i$ is affected by the independent error terms through the Bernoulli distribution.  Suppose the initial reference value was $(X_1=0,X_2=0, X_3=0, Y=0)$ and the next observed value is $(X_1=1,X_2=1, X_3=1, Y=1)$. Given that the system crashed ($Y=1$), how do we attribute it to $X_1, X_2, X_3$?  
Intuitively, $X_3$ is a sufficient cause of the crash since changing $X_3=1$ would lead to the crash irrespective of values of other variables. However,  $X_1$ and $X_2$ can be equally a reason for this particular crash since their coefficients sum to $0.9$. 
However, if either of $X_1$ or $X_2$ are observed to be zero, then the other one  cannot explain the crash. 
This example indicates that the attribution for any input variable depends on the equations of the data-generating process and also on the values of other variables. 
\subsection{Attribution score for system metric change}
We now define the \textit{attribution score} for explaining an observed value wrt a reference value. While system inputs can be continuous, we utilize the fact that system metrics are measured and compared over time. That is, we are often interested in attribution for a metric value compared to an reference timestamp. Reference values are typically chosen from previous values that are expected to be comparable (E.g., metric value at last hour or last week).  
By comparing to a reference timestamp, we simplify the problem by considering only two values of a continuous variable:
its \textit{observed} value, and its value on the\textit{ reference} timestamp. 

Formally, we express the problem of attribution of an outcome metric, $Y=y_t$ as explaining change in the metric wrt. a reference, $\Delta Y = y_t - y'$: Why did the outcome value change from $y'$ to $y_t$? 

\begin{definition}\label{def:attr-problem}
\textbf{Attribution Score. }
Let $Y=y_t$ and $Y=y'$ be the observed and reference values respectively of a system metric. Let $\vecV$ be the set of   input variables. Then, an \textit{attribution score} for $X \in \vecV$ provides the contribution of  $X$ in causing the change from $y'$ to $y_t$. 
\end{definition}

\subsection{The need for SCM and counterfactuals}
To estimate the \textit{causal} contribution, we  need to model the
data-generating process from  input variables to  the outcome. This is usually done by a structural causal model (SCM) $M$, that consists of a causal graph and structural equations describing the generating functions for each variable. 


\noindent \textbf{SCM. } Formally, a structural causal model~\cite{pearl2009book} is defined by a tuple $\langle \vecV, \vecU, \vecF, P(\vecu) \rangle$ where $\vecV$ is the set of observed variables, $\vecU$ refer to the unobserved variables, $\vecF$ is a set of functions, and $P(\vecU)$ is a strictly positive probability measure for $\vecU$. For each  $V\in \vecV$, $f_V \in \vecF$ determines its data-generating process, 
$V=f_V(\operatorname{Pa}_V, U_V)$ where $\operatorname{Pa}_V \subseteq \vecV \setminus \{V\}$ denotes \textit{parents} of $V$ and $U_V\subseteq \vecU$. We consider a non-linear, additive noise SCM such that $\forall V \in \vecV$, $f_V$ can be written as a additive combination of some $f^*_V(\operatorname{Pa}_V)$ and the unobserved variables (error terms).  We assume a Markovian SCM such that  unobserved variables (corresponding to error terms) are mutually independent, thus the SCM corresponds to a directed acyclic graph (DAG) over $\vecV$ with edges to each node from its parents. Note that a specific realization of the unobserved variables, $\vecU=\vecu$  determines the values of all other variables.

\noindent \textbf{Counterfactual. }Given an SCM, values of unobserved variables $\vecU=\vecu$,  a target variable $Y\in \vecV$ and a subset of inputs $\vecX \subseteq \vecV\setminus \{Y\}$,  a counterfactual corresponds to the query, \textit{``What would have been the value of $Y$ (under $\vecu$), had $\vecX$ been $\vecx$}''. It is written as $Y_{\vecx}(\vecu)$.

Using counterfactuals, we can formally express the attribution question in the the above example. Suppose the observed values are $Y=y_t$ and $X_i=x_i$ for some input $X_i$, under $\vecU=\vecu$. At an earlier reference timestamp with a different value of the unobserved variables, $\vecU=\vecu'$, the values are $Y=y'$ and $X_i=x'_i$.  Starting from the observed value ($\vecU=u$), the attribution for $X_i$ is characterized by the change in $Y$ after changing $X_i$ to its reference value,  $Y_{x_i}(\vecu) - Y_{x'_i}(\vecu)= y_t - Y_{x'_i}(\vecu)$. 
That is, given that $Y$ is $y_t$ with $X_i=x_i$ and all other variables at their observed value, how much would $Y$ change if $X_i$ is set to $x'_i$? 
Similarly, we can ask, $Y_{x_i, x'_1}(\vecu) - Y_{x'_i, x'_1}(\vecu)$ ($i \neq 1$), denoting the change in $Y$'s value upon setting $X=x_i$ when $X_1$ is set to its reference values. Thus, there can be multiple expressions to determine the counterfactual impact of $X_i$ depends on the values of other variables.


\section{Attribution using CF-Shapley value}
To develop an attribution score, we propose a way to average over the different possible counterfactual impacts. First, we posit desirable axioms that an attribution score should satisfy, as in~\cite{lundberg2017unified,pmlr-v162-jung22a}.
\subsection{Desirable axioms for an attribution score} \label{sec:axioms}
\begin{axiom*} Given two values of the metric, observed, $Y(\vecu)$ and reference, $Y(\vecu')$, corresponding to unobserved variables,  $\vecu$ and $\vecu'$ respectively, following properties are desirable for an attribution score $\bm{\phi}$ that measures the causal contribution of inputs $V \in \vecV$.
\begin{enumerate}
    \item \textbf{CF-Efficiency. } The sum of attribution scores for all $V \in \vecV$ equals the counterfactual change in output from reference to observed value, $Y(\vecu) -Y_{\vecv'}(\vecu)=Y_{\vecv}(\vecu') -Y(\vecu')=\sum_V \phi_V$.
    \item \textbf{CF-Irrelevance}. If a variable $X$ has no effect on the counterfactual value of output under all witnesses, $Y_{x',\vecs'}(\vecu)=Y_{\vecs'}(\vecu) \forall \vecS \subseteq \vecV \setminus \{X\}$, then $\phi_X=0$.
    \item \textbf{CF-Symmetry. } If two variables have the same effect on counterfactual value of output $Y_{\vecs'}(\vecu)-Y_{x'_1,\vecs'}(\vecu)=Y_{\vecs'}(\vecu)-Y_{x'_2,\vecs'}(\vecu) \forall \vecS \subseteq \vecV \setminus \{X_1, X_2\}$, then their attribution scores are same, $\phi_{X_1}=\phi_{X_2}$. 
    \item \textbf{CF-Approximation.} For any subset of variables $\vecS \subseteq \vecV$ set to their reference values $\vecs'$, the sum of attribution scores approximates the counterfactual change from observed value. I.e., there exists a weight $\omega(\vecS)$ s.t. the vector $\bm{\phi}_{\vecS}$ is the solution to the weighted least squares,  $\arg \min_{\bm{\phi}^*_{\vecS}} \sum_{\vecS \subseteq \vecV} \omega(\vecS) ((Y(\vecu) - Y_{\vecs'}(\vecu)) -\sum_{S \in \vecS} \phi^*_{S})^2$.   
\end{enumerate}
\end{axiom*}
Similar to shapley value axioms, these axioms convey intuitive properties that a \textit{counterfactual} attribution score should satisfy. \textit{CF-Efficiency} states the sum of attribution scores for inputs should equal the difference between the observed metric and the counterfactual metric when all inputs are set to their reference values. \textit{CF-Irrelevance} states that if changing the value of an input $X$  has no effect on the output counterfactual under all values of other variables, then the Shapley value of $X$ should be zero. \textit{CF-Symmetry }states that if changing the value of two inputs has the same effect on the counterfactual output under all values of the other variables, then both variables should have an identical attribution score. And finally, \textit{CF-Approximation} states the difference between the observed output and the counterfactual output due to a change in any subset of variables is roughly equal to the sum of attribution scores for those variables.

Note that \textit{CF-Efficiency} does not necessarily imply that the sum of attribution scores is equal to the  actual difference between the observed value and reference value. This is because the actual difference is a combination of the input variables' contribution and statistical noise (error terms). That is,  $y_t - y' = Y_{\vecv}(\vecu) - Y_{\vecv'}(\vecu') = \sum_V \phi_V + (Y_{\vecv'}(\vecu)- Y_{\vecv'}(\vecu'))$,  where we used the CF-Efficiency property for a desirable attribution score $\bm{\phi}$. The second term corresponds to the difference in metric with the same input variables but different noise corresponding to the observed and reference timestamps. 
This is the unavoidable noise component since we are explaining the change due to a \textit{single} observation.
Therefore, for any counterfactual attribution score to meaningfully explain the observed difference,  it is useful to select a reference timestamp to minimize the difference over exogenous factors (e.g., using a previous value of the metric on the same day of week or same hour). Given the true structural equations and an attribution score that satisfies the axioms, if the scores do sum to the observed difference in a metric, then it implies that reference timestamp was well-selected.

\subsection{The CF-Shapley value}
We now define the \actcause value that satisfies all four axioms.
\begin{definition}
Given an observed output metric $Y=y_t$ and a reference value $y'$, the CF-Shapley value for contribution by input $X$ is given by, 
\begin{equation} \label{eq:cf-shap}
     \phi_X = \sum_{\vecS \subseteq \vecV \setminus \{X\}}\frac{Y_{\vecs'}(\vecu)- Y_{x',\vecs'}(\vecu) }{n C(n-1, |\vecS|)}
\end{equation}
where $n$ is the number of input variables $\vecV$, $\vecS$ is the subset of variables set to their reference values $\vecs'$,  and $\vecU=\vecu$ is the value of unobserved variables such that $Y(\vecu)=y_t$.
\end{definition}
\begin{proposition}
\actcause value satisfies all four axioms, Efficiency, Irrelevance, Symmetry and Approximation.
\end{proposition}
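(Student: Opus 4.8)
The plan is to show that \actcause is nothing more than the classical Shapley value of a suitably chosen cooperative game, and then to read each of the four axioms off from the corresponding (already known) property of the Shapley value. First I would rewrite the weight in Equation~\eqref{eq:cf-shap} using the identity $\frac{1}{n\binom{n-1}{|\vecS|}} = \frac{|\vecS|!\,(n-1-|\vecS|)!}{n!}$, which is exactly the Shapley coefficient. Then I would fix the observed noise $\vecU=\vecu$ and define on the player set $\vecV$ the set function $v(\vecS) = Y(\vecu) - Y_{\vecs'}(\vecu)$, where $\vecS$ is the coalition of variables clamped to their reference values. Note $v(\emptyset)=0$, and the marginal contribution of $X$ to a coalition $\vecS$ is $v(\vecS\cup\{X\}) - v(\vecS) = Y_{\vecs'}(\vecu) - Y_{x',\vecs'}(\vecu)$, which is precisely the numerator in \eqref{eq:cf-shap}. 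Hence $\phi_X = \mathrm{Sh}_X(v)$, the Shapley value of $v$, and the proposition reduces to checking that each axiom's hypothesis coincides with the standard Shapley hypothesis for $v$.

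For \emph{CF-Irrelevance} and \emph{CF-Symmetry} this translation is immediate. The hypothesis of CF-Irrelevance, $Y_{x',\vecs'}(\vecu)=Y_{\vecs'}(\vecu)$ for all $\vecS$, says every marginal contribution of $X$ vanishes, i.e. $X$ is a null player, so each summand of $\phi_X$ is zero (the dummy-player property). The hypothesis of CF-Symmetry rearranges to $Y_{x'_1,\vecs'}(\vecu)=Y_{x'_2,\vecs'}(\vecu)$, i.e. $v(\vecS\cup\{X_1\})=v(\vecS\cup\{X_2\})$ for every $\vecS$ disjoint from $\{X_1,X_2\}$, so $X_1$ and $X_2$ are interchangeable in $v$. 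I would then invoke the standard pairing argument: split the sum defining $\phi_{X_1}$ into coalitions that do and do not contain $X_2$, match each term with a coalition of equal cardinality in the sum for $\phi_{X_2}$, and conclude $\phi_{X_1}=\phi_{X_2}$ since the weights depend only on cardinality and the marginals agree by interchangeability.

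For \emph{CF-Approximation} I would appeal to the least-squares (``Shapley kernel'') characterization: the Shapley value of $v$ is the exact minimizer of $\sum_{\vecS} \omega(\vecS)\bigl(v(\vecS) - \sum_{S\in\vecS}\phi^*_S\bigr)^2$ when $\omega$ is the Shapley kernel weight $\omega(\vecS)\propto \frac{n-1}{\binom{n}{|\vecS|}|\vecS|(n-|\vecS|)}$, with the boundary coalitions $\emptyset$ and $\vecV$ carrying the offset/efficiency constraints. Substituting $v(\vecS)=Y(\vecu)-Y_{\vecs'}(\vecu)$ reproduces exactly the objective in the axiom, so such an $\omega$ exists and $\bm{\phi}$ is its solution.

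The main work, and the one genuinely non-Shapley step, is the full \emph{CF-Efficiency} claim. Its first equality, $\sum_V\phi_V = Y(\vecu)-Y_{\vecv'}(\vecu)$, is just standard efficiency: $\sum_X \mathrm{Sh}_X(v) = v(\vecV)-v(\emptyset) = Y(\vecu)-Y_{\vecv'}(\vecu)$, which I would re-derive by telescoping marginal contributions. The harder part is the second equality, $Y(\vecu)-Y_{\vecv'}(\vecu) = Y_{\vecv}(\vecu')-Y(\vecu')$, which no generic game-theoretic argument yields; here I would use the additive-noise assumption on the SCM. Writing $Y(\vecu)=Y_{\vecv}(\vecu)$ and $Y(\vecu')=Y_{\vecv'}(\vecu')$, the claim becomes $Y_{\vecv}(\vecu)-Y_{\vecv'}(\vecu)=Y_{\vecv}(\vecu')-Y_{\vecv'}(\vecu')$, i.e. the total effect of switching the inputs from $\vecv'$ to $\vecv$ must be invariant to the noise realization. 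I expect this to be the main obstacle: it requires showing that, once all inputs are intervened on, the dependence of $Y$ on the remaining exogenous terms separates additively from its dependence on $\vecv$, so the noise contribution cancels in the difference. I would make this precise by propagating the additive structure $V=f^*_V(\operatorname{Pa}_V)+U_V$ through the graph from the intervened inputs down to $Y$, and identifying exactly the structural conditions (additivity at the $Y$ node, and the inputs forming a complete cut upstream of $Y$) under which the cancellation holds.
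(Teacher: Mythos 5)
Your proposal is correct, and at its core it takes the same route as the paper, just stated more explicitly: the paper's proof is precisely your reduction to the cooperative game $v(\vecS)=Y(\vecu)-Y_{\vecs'}(\vecu)$, carried out axiom by axiom. Its Irrelevance proof is your null-player observation (all numerators in Eqn.~\ref{eq:cf-shap} vanish); its Symmetry proof is your split-and-pair argument, separating coalitions by whether they contain $V_j$ and swapping marginals via the hypothesis; its Approximation proof invokes the same weighted least-squares (Shapley kernel) characterization with $\nu(\vecS)=Y(\vecu)-Y_{\vecs'}(\vecu)$; and its first Efficiency equality is your telescoping identity $\sum_V \phi_V = v(\vecV)-v(\emptyset)$, written in the permutation form of the Shapley value rather than via the binomial-coefficient identity --- a cosmetic difference. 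Where you genuinely diverge from, and improve on, the paper is the second equality of CF-Efficiency, $Y(\vecu)-Y_{\vecv'}(\vecu)=Y_{\vecv}(\vecu')-Y(\vecu')$. The paper dismisses it with ``We can show it analogously under $\vecU=\vecu'$,'' but that is not sufficient as written: telescoping under $\vecu'$ sums marginal contributions of counterfactuals evaluated at $\vecu'$, which is not the quantity $\sum_V \phi_V$, whose summands are all evaluated at $\vecu$. You correctly identify that this equality is not game-theoretic at all but a structural statement about the SCM, and your caveat is also on target: additive noise at every node is not by itself enough when nonlinear mechanisms sit between the inputs and $Y$ (e.g., with $Y=f_Y(D)+U_Y$ and $D=f_D(X)+U_D$, the difference $f_Y(f_D(x)+u_D)-f_Y(f_D(x')+u_D)$ still depends on $u_D$ unless $f_Y$ is affine), so some condition like your ``inputs form a complete cut and noise enters $Y$ additively'' is genuinely needed for the cancellation. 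In short, your reduction delivers three and a half of the four axioms exactly as the paper does; the remaining half-step is a gap in the paper's own proof that your proposal is the only one to flag and outline how to close.
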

\begin{proof}
\textit{Efficiency.} Following ~\cite{pmlr-v162-jung22a,vstrumbelj2014explaining}, the \actcause value for an input $V_i$ can be written as, 
\begin{equation}
    \phi_{V_i}= \frac{1}{n!} \sum_{\pi \in \Pi(n)} Y_{\vecw'_{pre}(\pi)}(\vecu) - Y_{v'_i, \vecw'_{pre}(\pi)}(\vecu) 
\end{equation}
where $\Pi$ is the set of all permutations over the $n$ variables and $\vecW_{pre}(\pi)$ is the subset of variables that precede $V_i$ in the permutation $\pi \in \Pi$. The sum is, 
\begin{equation}
    \begin{split}
        &\sum_{i=1}^{n} \phi_{V_i} = \frac{1}{n!} \sum_{\pi \in \Pi(n)} \sum_{i=0}^{n} Y_{\vecw'_{pre}(\pi)}(\vecu) - Y_{v'_i, \vecw'_{pre}(\pi)}(\vecu) \\
        &=\frac{1}{n!} \sum_{\pi \in \Pi(n)}  Y_{\emptyset}(\vecu) - Y_{\vecv'}(\vecu) \\
        &=  Y(\vecu) - Y_{\vecv'}(\vecu)
    \end{split}
\end{equation}
We can show it analogously under $\vecU=\vecu'$. 
\\
\textit{CF-Irrelevance.}
If $Y_{x', \vecs'}(\vecu) = Y_{\vecs'}(\vecu) \forall \vecS \subseteq \vecV \setminus \{X\}$, then the numerator in Eqn.~\ref{eq:cf-shap} for $\phi_X$, will be zero and the result follows. 
\\
\textit{CF-Symmetry.}
Assuming same effect on counterfactual value, we write the \actcause value for $V_i$ and show it is the same for $V_j$. 
\begin{equation*}
\begin{split}
    &\phi_{V_i} = \sum_{\vecW\subseteq \vecV \setminus \{V_i\}}\frac{Y_{\vecw'}(\vecu)- Y_{v'_i,\vecw'}(\vecu) }{n C(n-1, |\vecW|)}\\
            &= \sum_{\vecW\subseteq \vecV \setminus \{V_i, V_j\}}\frac{Y_{\vecw'}(u)- Y_{v'_i,\vecw'}(u)}{n C(n-1, |\vecW|)} + \sum_{\vecZ\subseteq \vecV \setminus \{V_i, V_j\}} \frac{Y_{v'_j,\vecz'}(u)- Y_{v'_i,v'_j,\vecz'}(u)}{n C(n-1, |\vecZ|+1)} \\ 
            &= \sum_{\vecW\subseteq \vecV \setminus \{V_i, V_j\}}\frac{Y_{\vecw'}(\vecu)- Y_{v'_j,\vecw'}(\vecu)}{n C(n-1, |\vecW|)} + \sum_{\vecZ\subseteq \vecV \setminus \{V_i, V_j\}} \frac{Y_{v'_i,\vecz'}(\vecu)- Y_{v'_i,v'_j,\vecz'}(\vecu)}{n C(n-1, |\vecZ|+1)} \\
            &= \sum_{\vecW\subseteq \vecV \setminus \{V_j\}}\frac{Y_{\vecw'}(\vecu)- Y_{v'_j,\vecw'}(\vecu)}{n C(n-1, |\vecW|)} = \phi_{V_j}
\end{split}
\end{equation*}
where the third equality uses $Y_{v'_i,\vecs'}(\vecu)=Y_{v'_j,\vecs'}(\vecu) \text{\ } \forall \vecS \subseteq \vecV \setminus \{V_i, V_j\}$.
\\
\textit{CF-Approximation.} Here we use a property~\cite{lundberg2017unified} on value functions of standard Shapley values.  There exists specific weights $\omega(S)$ such that the Shapley value is the solution to $\arg \min_{\bm{\phi}^*_{\vecS}} \sum_{\vecS\subseteq \vecV}\omega(\vecw) (\nu(S) -\sum_{\vecs \in \vecS} \phi^*_w)^2$ where $\nu(\vecS)$ is the value function of any subset $\vecS \subseteq \vecV$. The result follows by selecting $\nu(\vecS)=Y(\vecu)-Y_{\vecs'}(\vecu)$.  
\end{proof}


\noindent \textbf{Comparison to do-shapley. }
Unlike \actcause, the do-shapley value~\cite{pmlr-v162-jung22a} takes the expectation over all values of the unobserved $\vecu$,  $\bm{E}_{\vecu}[Y|do(\vecS)] - \bm{E}_{\vecu}[Y]$. Thus, it measures the \textit{average} causal effect  over values of $\vecu$, whereas for attributing a single observed value, we want to know the contributions of inputs under the same $\vecu$.

\subsection{Estimating \actcause values}
\label{sec:est-cf}
Eqn.~\ref{eq:cf-shap} requires estimation of counterfactual output at different (hypothetical) values of input, and in turn requires both the  causal graph and the structural equations of the SCM. Using knowledge on the system's metric computation, the first step is to  construct its computational graph. Then for each node in the graph, we fit its generating function using a predictive model over its parents, which we consider as the data-generating process (fitted SCM).

To fit the SCM equations, for each node $V$, a common way is to use supervised learning to build a model $\hat{f}_V$ estimating its value using the values of its parent nodes at the same timestamp. 
However, such a model will have high variance due to natural temporal variation in the node's value over time. Since including variables predictive of the outcome  reduces the variance of an estimate in general~\cite{bottou2013counterfactual}, we utilize auto-correlation in time-series data to include the previous values of the node as predictive features. Thus, the final model is expressed as, $\forall V \in \vecV$,
\begin{equation}\label{eq:catden-model}
    \hat{v}_t = \hat{f}(\operatorname{Pa}_V, v_{t-1},v_{t-2} \cdots , v_{t-r})
\end{equation}
where $r$ is the number of auto-correlated features  that we include. 
The model can be trained using a supervised time-series prediction algorithm with auxiliary features, such as DeepAR~\cite{salinas2020deepar}.

We then use the fitted SCM equations to estimate the counterfactual with the 3-step algorithm from 
Pearl~\cite{pearl2009book}, assuming additive error. 
To compute $Y_{\vecs'}(\vecu)$ for any subset $\vecS \subseteq \vecV$, the three steps are,
\begin{enumerate}
    \item \textbf{Abduction.} Infer error of structural equations on all observed variables. For each $V \in \vecV$,  $\hat{\epsilon}_{v,t} = v_t - \hat{f}_V(\operatorname{Pa}(V), v_{t-1}..v_{t-r})$ where $v_t$ is the observed value at timestamp $t$. 
    \item \textbf{Action.} Set the value of $\vecS\leftarrow s'$, ignoring any parents of $\vecS$.
    \item \textbf{Prediction. } Use the inferred error term and new value of $s'$ to estimate the new outcome, by proceeding step-wise for each level of the graph~\cite{pearl2009book,dash2022evaluating} (i.e., following a topological sort of the graph),  starting with $\vecS$'s children and proceeding downstream until $Y$ node's value is obtained.  For each $X \in \vecV$ ordered by the topological sort of the graph (after $\vecS$), $x'=\hat{f}_X(Pa'(X), \cdots) +\hat{\epsilon}_{x,t}$. And finally, we will obtain, $y'=\hat{f}_Y(Pa'(Y), \cdots) +\hat{\epsilon}_{y,t}$. 
\end{enumerate}
 Thus, the \actcause score for any input is obtained by repeatedly applying the above algorithm and aggregating the required counterfactuals in Eqn.~\ref{eq:cf-shap}; we use a common Monte Carlo approximation to sample a fixed number ($M=1000$) of values of $\vecS$~\cite{castro2009polynomial,fatima2008shapleycompute}. 

\section{Evaluation}
Our goal is to attribute observed changes in the output metric of an ad matching system. We first describe the system and conduct a simulation study to evaluate \actcause scores.

\subsection{Description of the ad matching system}
We consider an ad matching system where the goal is to retrieve all the relevant ads for a particular web search query by a user (these ads are ranked later to show only top ones to the user). The outcome variable is the average number of ads matched for each query, called the \textit{``matching density''} (or simply \textit{density}). This outcome can be affected by multiple factors, including the availability of ads by advertisers, the distribution and amount of user queries issued on the system, any algorithm changes, or any other system bug or unknown factors. For simplicity, we consider a  matching algorithm based on matching \textit{exact} full text between a query and provided keyword phrases for an ad. This algorithm remains stable over time due to its simplicity. Thus, we can safely assume that there are no algorithm changes or code bugs for the matching algorithm under study. Given an extreme or unexpected value of  density, our goal then is to attribute between change in ads and change in queries. 

Since there are millions  of queries and ads, we categorize the data by nearly 250 semantic query categories. Examples of query categories are "Fashion Apparel", "Health Devices",  "Internet", and so on.
A naive solution may be to simply compare the magnitude of observed change in ad demand or query volume across categories. That is, given a change in density on day $t$, choose a reference day $r$ (e.g., same day  last week) and compare the values of \ad and \qv. We may conclude that the factor with the highest percentage change is causing the overall change in density. However, the limitation is that 
the factor with the highest percentage change may neither be necessary nor sufficient to cause the change because its effect depends on the values of other factors. E.g.,  an increase in \qv for a category can either have positive, negative, or no effect on the daily density depending on its ad demand compared to other categories. This is because the density is computed as a query volume-weighted average of category density; increase in \qv for a low-demand (and hence low-density) category \textit{decreases} the aggregate density (see Eqn.~\ref{eq:dailyden-eqn}). 

\subsection{Constructing an SCM for ad density metric}
To apply the \actcause method  for attributing a matching density value, we define a causal graph based on how the metric is computed, as shown in 
Figure~\ref{fig:causal-graph}. The number of queries for a category  is measured by the number of search result page views (SRPV). The number of ads is measured by the number of listings posted by advertisers. For simplicity, we call them  \textit{\qv} and \textit{\ad}. We assume that given a category, the \ad and \qv are independent of each other since they are driven by the advertiser and user goals respectively.  The combination of \ad and \qv for a category determine its  category-wise density which then is aggregated to yield the \textit{daily density}.  As we are interested in attribution over days as a time unit, we refer to the aggregate density as daily density, $y$. 
Thus, the variables $\{ad^{c1}, qv^{c1},  ad^{c2}, qv^{c2} \cdots  ad^{ck}, qv^{ck} \}$ are the $2k$ inputs to the system where $ci$ is the category, $\dem$ refers to \ad, $\vol$ refers to \qv, and  $k$ is the number of categories. 

The structural equation from category-wise densities to daily density is known. It is simply a weighted average of the category-wise densities, weighted by the \qv.
\begin{equation}\label{eq:dailyden-eqn}
    y_t = f(\den^{c1}_t, \vol^{c1}_t, \cdots \den^{ck}_t, \vol^{ck}_t) = \frac{\sum_c \den^c_t \vol^c_t}{\sum_c \vol^c_t}
\end{equation}
where $den^c_t$ is the density of category $c$ on day $t$ and $\vol^c_t$ is the  \qv for the category on day $t$.
However, the equation from category-wise \ad and \qv to category density is  infeasible to obtain. This would involve ``replay'' of a computationally expensive matching algorithm to real-time queries and ad listings but the ad listings are not available (only a daily snapshot of ads inventory is stored in the logs). We will show how to to estimate the structural equation for category density in Section~\ref{sec:fit-scm}.


\subsection{Evaluating \actcause on simulated data}
Before applying \actcause on the ad matching system, we first evaluate the method on simulated data motivated by the causal graph of the system. This is because it is impossible to know the ground-truth attribution using data from a real-world system,  since we do not know how the change in input variables led to  the observed metric value and which inputs were the most important. 

We construct simulated data based on the causal structure of Figure~\ref{fig:causal-graph}. For each category, we assume \ad and \qv as independent Guassian random variables (we simulate real-world variation in \qv using a Beta prior). The category-wise density is constructed as a monotonic function of ad demand and has a biweekly periodicity. The SCM equations are,
\begin{align} \label{eq:sim-gen}
    \gamma &= \mathcal{B}(0.5,0.5); \text{\ }   \vol^c_t = \mathcal{N}(1000\gamma, 100); \text{\ }  \dem^c_t     = \mathcal{N}(10000, 100) \nonumber \\
        den^c_t   &= g(\dem^c_t,\vol^c_t, den^c_{t-1}) + \mathcal{N}(0,\sigma^2) \nonumber \\
                &= \kappa * \dem^c_t/\vol^c_t + \beta * a* den^c_{t-1} + \mathcal{N}(0,\sigma^2) \\
        y_t &= \frac{\sum_c den^c_t \vol^c_t}{\sum_c \vol^c_t} \label{eq:dailyden}
\end{align}
where $\vol^c_t$ and $\dem^c_t$ are the query volume and ad demand respectively for category $c$ at time $t$. They combine to produce the ad matching density $den^c_t$ based on a function $g$ and additive normal noise. The variance of the noise, $\sigma^2$ determines the stochastic variation in the system. For the simulation, we construct $g$ based on two observations about the category density: \textbf{1)} it is roughly a ratio of the \textit{relevant} ads and the number of queries; and \textbf{2)} it exhibits auto-correlation with its previous value and periodicity over a longer time duration. We use $\kappa$ to denote the fraction of relevant ads and add a second term with parameter $a$ to simulate a biweekly pattern, $a=1 \text{\ } \operatorname{if} \operatorname{floor}(t/7) \text{ is even} \operatorname{else} \text{ }a=-1 $.
$\beta$ is the relative importance of the previous value in determining the current category density.  
  Finally, all the category-wise densities are weighted by their query volume $\vol^c_t$ and averaged to produce the daily density metric, $y_t$.
  
Each dataset generated using these equations has 1000 days and 10 categories; we set $\kappa=0.85, \beta=0.15$ for simplicity. We intervene on the \ad or \qv of the 1000th point to construct an outlier metric that needs to be attributed. Given the biweekly pattern, reference date is chosen 14 days before the 1000th point.

\noindent \textbf{Setting ground-truth attribution.} Even with simulated data, setting ground-truth attribution can be tricky. For example, if there is an increase in \ad for one category and increase in \qv for another, it is not clear which one would cause the biggest impact on the daily density. That depends on their \qv and \ad respectively and any changes in other categories. To evaluate attribution methods, therefore, we consider simple interventions where objective ground-truth can be obtained. Specifically, for ease of interpretation, we intervene on  only two  categories at a time such that the first has a substantially higher chance of affecting the outcome metric than the second. 

We consider two configurations: change in 1) \ad and 2) \qv. For changing \ad (\textit{Config 1}), we choose two categories such that the first has the highest \qv and the second has the lowest \qv. We double the \ad for both categories with a slight difference (x2 for the first category, x2.1 for the second). Since the category-wise densities are weighted by \qv to obtain the daily density metric, for the same (or similar) change in demand, it is natural that first category has higher impact on daily density (even though they may have similar impact on their category-wise density). For \textit{Config 1}, thus, the ground-truth attribution is the first category. For changing \qv (\textit{Config 2}), we choose two categories such that the first has the most extreme density and the second has density equal to the reference daily density. Then, we change \qv as above: x2 for the first category and x2.1 for the second. Following Eqn.~\ref{eq:dailyden}, \qv change in a category having the same density as the daily density is expected to have low impact on daily density (keeping other categories constant, if category density is not impacted by \qv,  an increase in \qv for a category with density equal to daily density causes zero change in daily density). Thus, the ground-truth attribution (category with the highest impact on output metric) is again the first category. Note that \qv has higher variation across categories, so a higher multiplicative factor does not necessarily mean a higher absolute difference. 

\noindent \textbf{Baseline attribution methods.} We compare \actcause to the standard Shapley value (as implemented in SHAP~\cite{lundberg2017unified}, \stdshapley) and the  do-shapley value (\doshapley)~\cite{pmlr-v162-jung22a}. The \stdshapley method ignores the structure and fits a model directly predicting daily density $y_t$ using (category-wise) \ad and \qv features. It uses the predictions of this model for computing the Shapley score. For the \doshapley method, we notice that our causal graph corresponds to the Direct-Causal graph structure in their paper and use the estimator from Eq. (5) in ~\cite{pmlr-v162-jung22a}, that depends on the same daily density predictor as the standard Shapley value. 
We also evaluate on  three intuitive baselines based on absolute change in inputs: \textbf{1)} The category with the biggest change in \ad (\demdelta); \textbf{2)}  \qv (\supdelta); or \textbf{3)} density multiplied by \qv (\proddelta) since this product is used in the daily density equation. 

For the \actcause algorithm, we  fit the structural equation for category density, using the following features: \ad, \qv, $den_{t-1}, den_{t-7}, den_{t-14}$. For both the \actcause category density prediction and the \stdshapley daily density prediction model, we use a 3-layer feed forward network. We use all data uptil 999th day for training and validation for all models.  

\begin{figure}[tb]
    \centering
    \includegraphics[width=0.85\columnwidth]{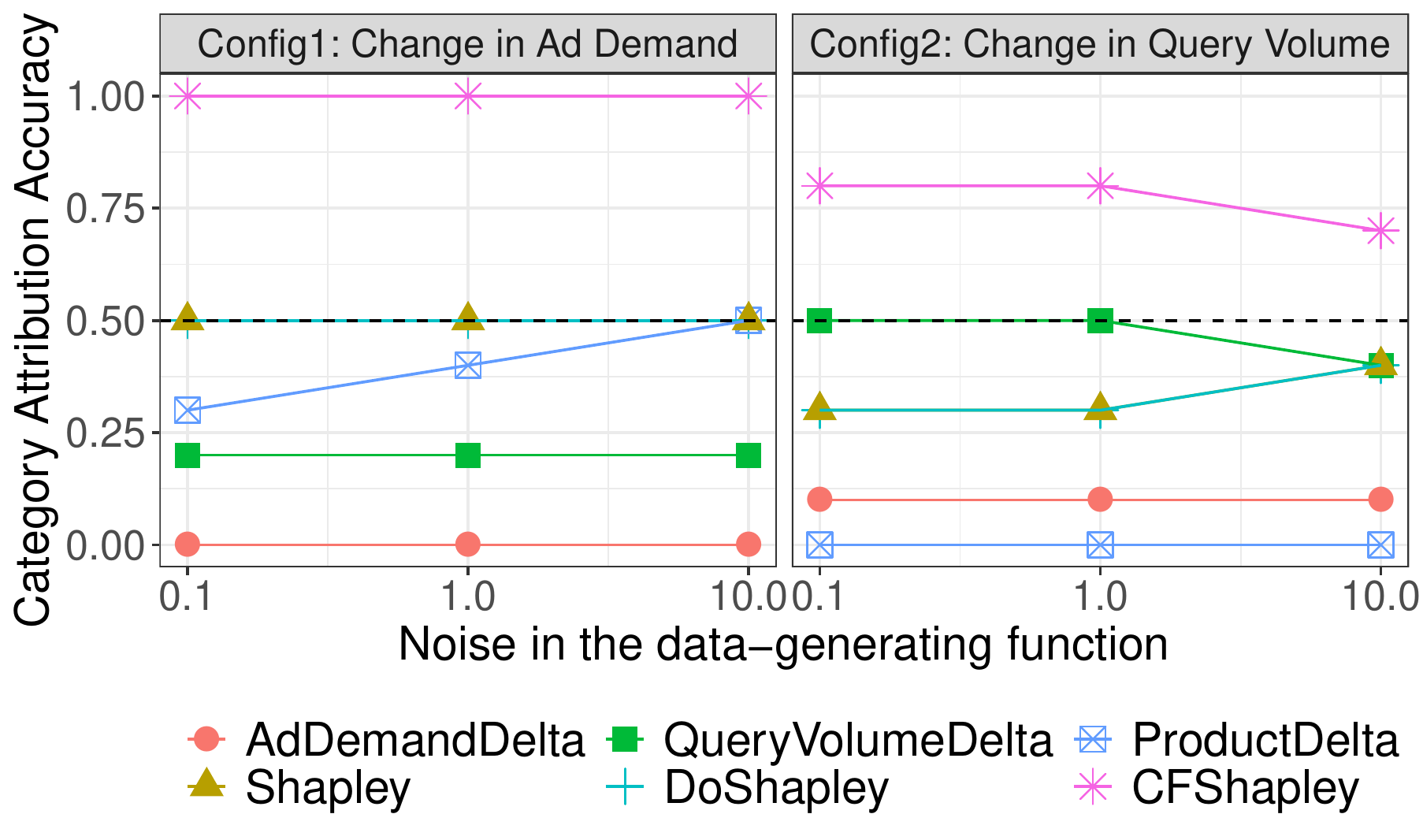}
    \caption{Category attribution accuracy for various methods.}
    \label{fig:sim-cat-attr}
\end{figure}

\noindent \textbf{Results. }
For each attribution method, we measure accuracy compared to the ground-truth as we increase the noise ($\sigma$) in the true data-generating process (SCM) ($\sigma=\{0.1, 1, 10\}$). As noise in the generating process increases, we expect higher error for fitting structural equations and thus the attribution task becomes harder.
\textit{Attribution accuracy} is defined as the fraction of times a method outputs the highest attribution score for the correct category (first category), over 20 simulations. 

Figure~\ref{fig:sim-cat-attr} shows the results. \actcause obtains the highest attribution accuracy for both Config 1 and 2. In general, attribution for \ad is easier than \qv because both the category density and daily density are monotonic functions of the \ad. That is why we observe near 100\% accuracy for \actcause under \textit{Config 1}, even with high noise. The attribution accuracy for \textit{Config 2} is 70-80\%, decreasing as more noise is introduced.

In comparison, none of the baselines achieve more than 50\% (random-guess) accuracy. Note that the \stdshapley and \doshapley methods obtain similar attribution accuracies. While their attribution scores are different, the highest ranked category often turns out to be the same since they rely on the same daily density model (but use different formulae). Inspecting the predictive accuracy of the daily density model offers an explanation: error on the daily density prediction is higher than that for category-wise density prediction (and it increases as the noise is increased). This indicates the value of computing an individualized counterfactual using the full graph structure, rather than focusing on the average (causal) effect. Finally, the other intuitive baselines fail on both tasks since they only look at the change in the input variables. 

\section{Case study on ad matching system}
We now apply the \actcause attribution method on data logs of a real-world ad matching system  from July 6 to Dec 28, 2021. For each query, we have log data  on the number of ads matched by the system. In addition, each query is marked with its category. The category \qv is measured as the number of queries issued by users for each category. This allows us to calculate the ground-truth matching density on each day, category-wise and aggregate. Separately, to compute the category-wise \textit{input} ad demand for a day, we fetch each ad listing available on the day and assign it to a category if any query from that category contains a word that is present in its keywords.   This is the total sum of ad listings that are potentially relevant to the query for the exact matching algorithm (that matches the full query exactly to the full ad keyword phrase). 

\subsection{Implementing \actcause: Fitting the SCM}
\label{sec:fit-scm}
We follow the method outlined in Section~\ref{sec:est-cf}. The main task is to estimate the structural equations for category-wise ad density.
There are over 250 categories; fitting a separate model for each is not efficient. Besides, it may be beneficial to exploit the common patterns in the  different time-series. We therefore consider a deep learning-based model, DeepAR~\cite{salinas2020deepar} that fits a single recurrent network for multiple timeseries (we also tried a transformer-based model, temporal fusion transformer (TFT)~\cite{lim2021temporal} but found it hard to tune to obtain comparable accuracy). As specified in Equation~\ref{eq:catden-model}, for each category, the DeepAR model is given \ad,  \qv and the autoregressive values of density for the past 14 days. Note that rather than predicting over a range of days (which can be innacurate), we fit the  timeseries model separately for each day using data up to its $t-1$th day, to utilize the additional information available from the previous day. To implement DeepAR, we used the  open-source GluonTS library. 

We compare the DeepAR model to three baselines. As simple baselines that capture the weekly pattern, we consider, \textbf{1)} category density on the same day a week before; and \textbf{2)} the average density over the last four weeks. We also consider a 3-layer feed-forward network that uses the same features as DeepAR.  Table~\ref{tab:deepar-acc} shows the prediction error. DeepAR model obtains the lowest error on the validation set according to all three metrics: mean absolute percentage error (MAPE), median APE, and the symmetric MAPE~\cite{makridakis2020m4}. 
For our results, we choose DeepAR as the estimated SCM equation and apply \actcause on data from Nov 15 to Dec 28. We chose Nov 15 to allow sufficient days of training data.

\noindent \textbf{Choosing reference timestamp. }
The \actcause method requires specifying a  reference day that provides the 
the ``expected/usual'' density value. Common ways to choose it are the  last day’s value or the value last week on the same day. We choose the latter due to known weekly patterns in the density metric. 

\begin{table}[tb]
    \centering
    \resizebox{0.7\columnwidth}{!}{%
    \begin{tabular}{lccc}
        \toprule
        Model       &Mean APE (\%)   & Median APE (\%) & sMAPE\\
        \midrule
        LastWeek    & 21.2     &  11.5      & 0.20\\
        Avg4Weeks   & 25.1     & 10.6       & 0.17 \\
        FeedForward & 20.0     & 10.8       & 0.20\\
        DeepAR      & \textbf{15.6}     & \textbf{7.8}        & \textbf{0.13} \\
        \bottomrule
    \end{tabular}}
    \caption{Accuracy of category-wise density prediction models.} 
    \label{tab:deepar-acc}
    \vspace{-3em}
\end{table}

\begin{figure}[tb]
    \centering
    \includegraphics[width=0.7\columnwidth]{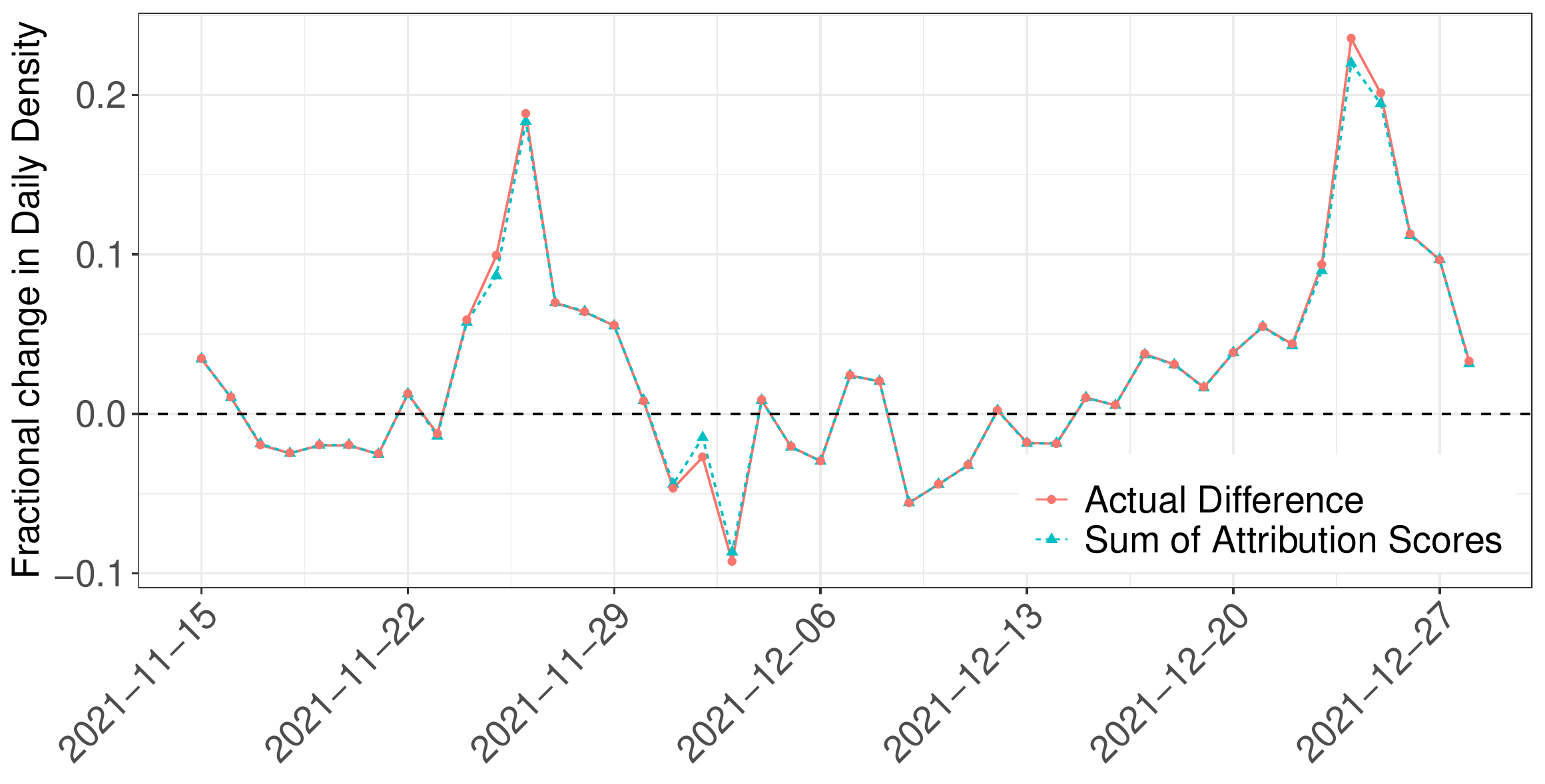}
    \caption{Comparison of actual percentage change in daily density  with sum of estimated \actcause attribution scores. }
    \label{fig:shapley-valid}
\end{figure}


\subsection{Validating the CF-Efficiency axiom}
 We first check whether the obtained \actcause scores sum up to the observed percentage change in daily density metric (Figure~\ref{fig:shapley-valid}). The difference between the sum of \actcause scores and the actual change is less than 0.10\% for all days, indicating that our choice of reference timestamp is appropriate (Sec.~\ref{sec:axioms}) and that the shapley value computation by approximation is capturing relevant signal. 

\subsection{Choosing dates for evaluation}
While we computed attribution scores for all days, typically one is interested in attribution for unexpected values for daily density. 

To discover unusual days for attribution, we fit a standard time-series model to the aggregate daily density data. 
We use four candidate models: \textbf{1)} daily density on the same day last week; \textbf{2)} mean density of the last 4 weeks; \textbf{3)} a feed forward network; and \textbf{4)} DeepAR model. As for the category-wise prediction, all neural network models are provided the last 14 days of daily density. Table\ref{tab:agg-acc} shows the mean APE, median APE, and SMAPE. The feed forward model obtains the lowest error. While DeepAR is a more expressive model than FeedForward, a potential reason for its lower accuracy is the number of training samples (only as many data points as the number of days for dailydensity prediction unlike category-wise prediction).  
For its simplicity, we use the FeedForward network for detecting outlier days. Its prediction for different days and the outliers detected can be seen in Figure~\ref{fig:ff-preds}. Like DeepAR, the feedforward model is implemented as a Bayesian probabilistic  model, so it outputs prediction samples rather than  a point prediction. 

\begin{table}[tb]
    \centering
    \resizebox{0.7\columnwidth}{!}{%
    \begin{tabular}{lccc}
        \toprule
        Model       & Mean APE (\%)   & Median APE (\%) & sMAPE \\
        \midrule
        LastWeek    & 4.6    & 3.1 & 0.047\\
        Last4Weeks  & 4.5    & 3.3  & 0.047\\
        FeedForward & \textbf{3.0}    & \textbf{2.2} & \textbf{0.031} \\
        DeepAR      & 3.4    & 2.4 & 0.035\\
        
        \bottomrule
    \end{tabular}}
    \caption{Prediction error for daily density models.}
    \label{tab:agg-acc}
    \vspace{-2em}
\end{table}

\begin{figure}[tb]
    \centering
    \includegraphics[width=0.65\columnwidth]{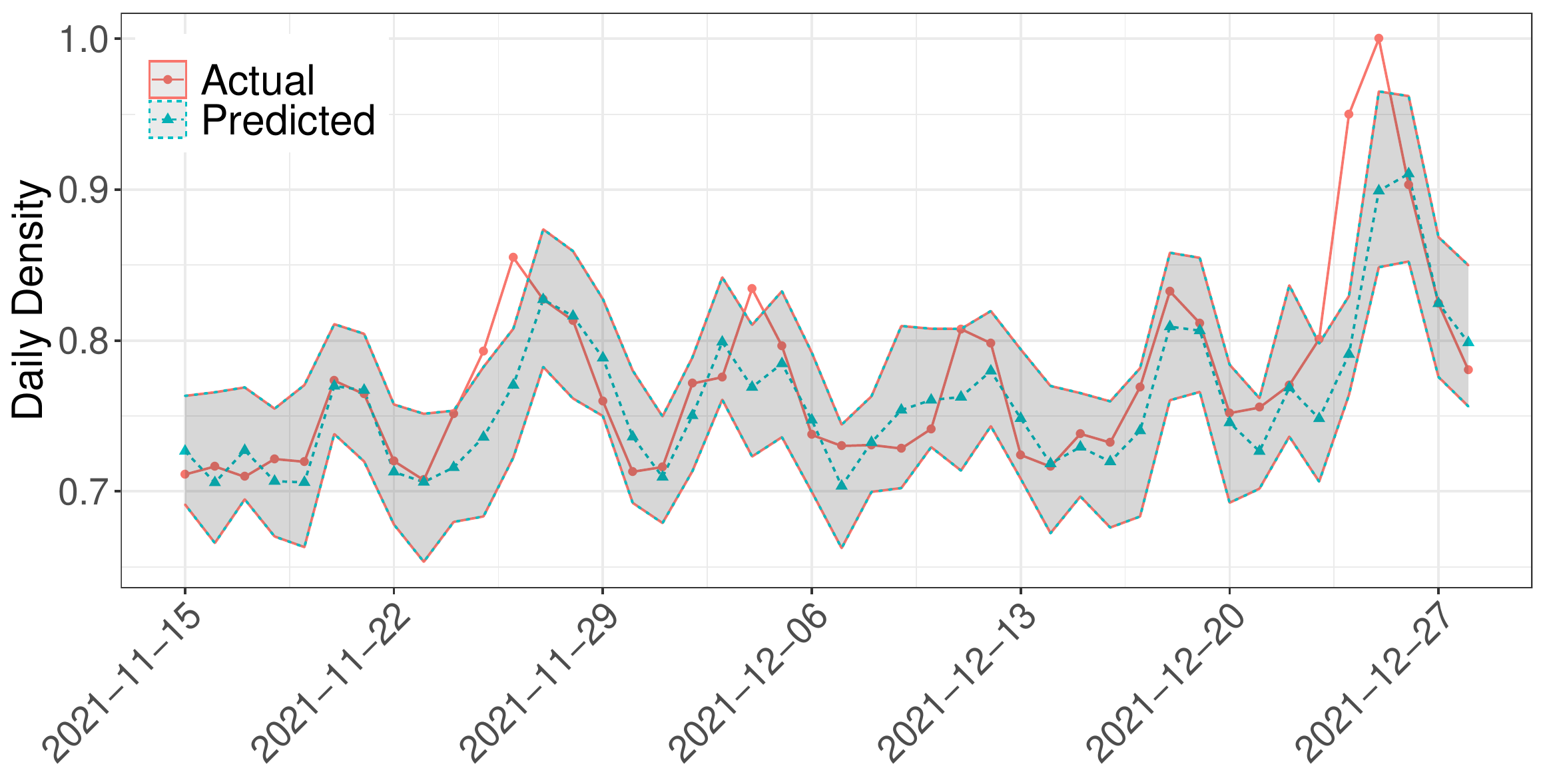}
    \caption{Outliers through FeedForward model's prediction. Shaded region represents the 95\% prediction interval.}
    \label{fig:ff-preds}
\end{figure}

\begin{figure}[tb]
    \centering
    \includegraphics[width=0.8\columnwidth]{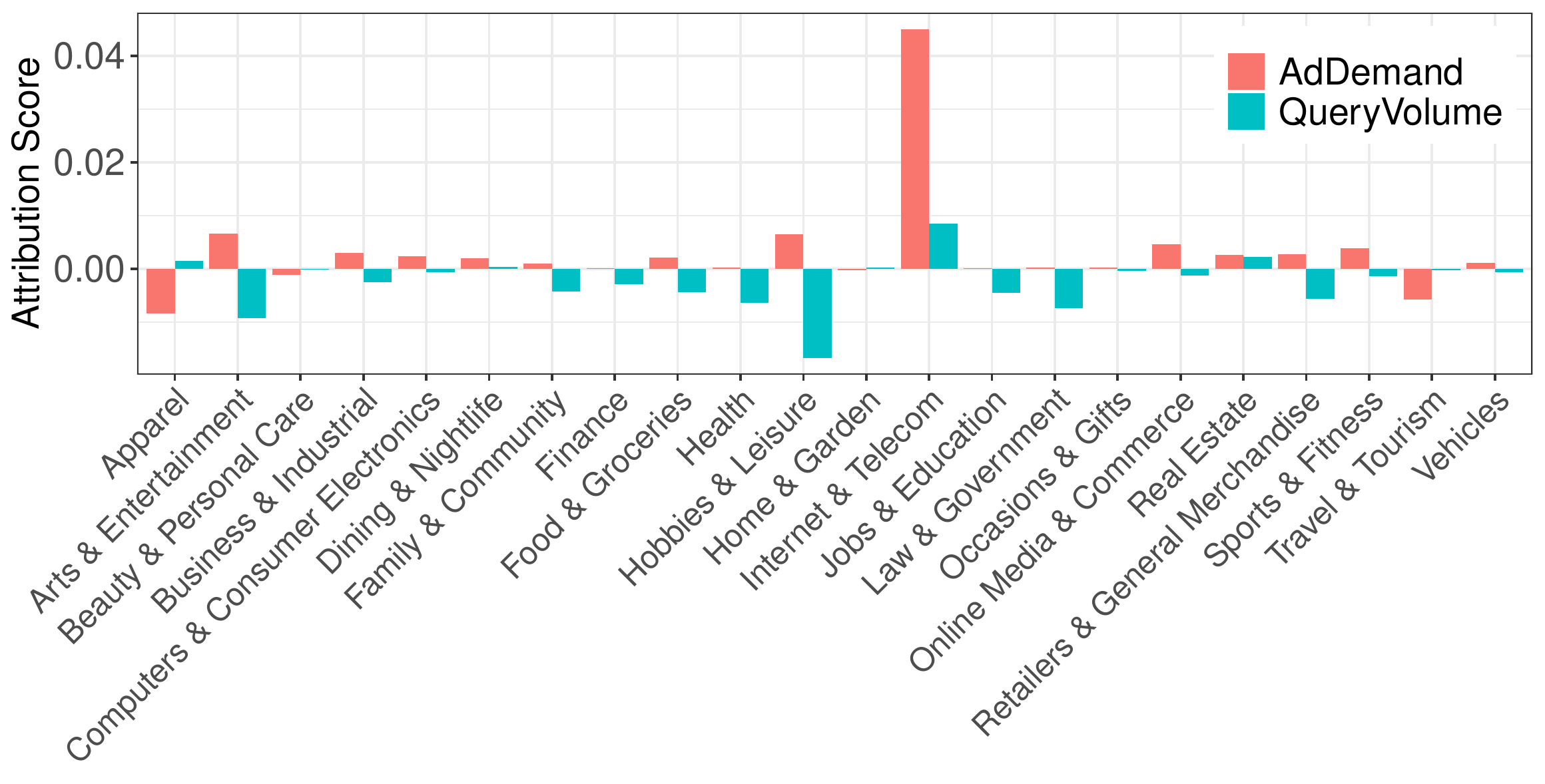}
    \caption{Attribution scores of different categories by \ad and \qv on December 4.}
    \label{fig:cat-contrib}
\end{figure}

\begin{table}
    \centering
    \resizebox{0.85\columnwidth}{!}{%
    \begin{tabular}{lcc}
    \toprule
        Category                 &       AdDemandAttrib    &   QueryVolumeAttrib \\
    \midrule
        \textit{Sort by AdDemandAttrib} & &\\
        Internet \& Telecom     &       \textbf{0.0450}      &   0.00850 \\
        Apparel                 &       -0.00843    &   0.00151 \\
        Arts \& Entertainment   &       0.00663     &   -0.00928 \\
        Hobbies \& Leisure      &       0.00646     &   -0.0168  \\
        Travel \& Tourism       &       -0.00584    &   -0.000287 \\
    \midrule
    \textit{Sort by QueryVolumeAttrib} & &\\
      Hobbies \& Leisure        &       0.00646     &   \textbf{-0.0168}  \\
      Arts \& Entertainment      &       0.00633     &   -0.00928 \\
      Internet \& Telecom       &       0.0450      &    0.00850 \\ 
      Law \& Government          &       0.000203    &   -0.00743 \\
      Health                      &       0.000161     &   -0.00645 \\
    \bottomrule
    \end{tabular}}
    \caption{Ad demand and \qv attribution on Dec 4. }
    \label{tab:dec4-attr}
    \vspace{-3em}
\end{table}

Days where the daily density goes beyond the 95\% prediction interval are chosen for attribution. A visual inspection shows two clusters,  Thanksgiving/Black Friday and Christmas, which are expected due to their significance  in the US. We also find an extreme value on Dec. 4. In all three cases, the daily density increases. Intuitively, one may have expected the opposite for holidays: density would decrease since people are expected to spend less time online.  

\subsection{Qualitative analysis}
We now use \actcause to explain these observed changes. 

\noindent \textbf{December 4. }
Figure~\ref{fig:cat-contrib} shows the attribution by different categories, aggregated up to obtain 22 high-level categories. The \textit{Internet \& Telecom} (IT) category has the biggest positive attribution score while the \textit{Hobbies \& Leisure} (HL) category has the biggest negative attribution score. That is, daily density decreased on the day due to the HL category. 

To understand why, we look at the attribution scores separately for \ad and \qv for each category in Table~\ref{tab:dec4-attr}. The attribution score reflects to the percentage change in daily density compared to last week, due to \ad or \qv of a category. The only categories to have an attribution score greater than 1\% are \textit{IT} and \textit{HL}, agreeing with the category-wise analysis. Specifically, the change in \ad due to  \textit{IT} leads to a 4.5\% increase in daily density. The \qv change in  \textit{HL}, on the other hand, leads to a 1.7\% decrease in daily density. Considering all categories together, \ad change  leads to an 6.5\% increase in daily density   and \qv change leads to a 5.6\% decrease. The net result is a 1\% improvement over the last week. While an increase of 1\% of daily density may look small, note that the value last week was already inflated due to it being a Black Friday week. This is why we detect outliers using the expected time-series pattern rather than simply difference from last week. On such days, one may also consider an alternative baseline, e.g., two weeks before.

Are the attributions meaningful? In the absence of ground-truth, we dive deeper into the query logs to check for evidence. We do find a significant increase in queries for the \textit{HL} category. In fact, more than 70\% of the increase in \qv for \textit{HL} is due to cheetah-related queries. On manual inspection, we find that December 4 is \textit{International Cheetah Day}. Cheetah-related queries also contribute to 86\% of the \ad increase for \textit{HL} category.  
Given that the category density of \textit{HL} is much lower than the daily density, this increase in \qv causes a \textit{decrease} in daily density, leading to the negative attribution score. Due to \ad volume increase (perhaps in anticipation of the Cheetah Day), the \textit{HL} also leads to an increase of  0.6\% in daily density (see Table~\ref{tab:dec4-attr}. On the other hand, \textit{IT} category's main contribution is from an increase in ad demand. Logs show a substantial (14\%) increase in ads compared to last week for the category on Dec 4, which explains its high attribution score for \ad. 
This increase is sustained across queries, possibly indicating a shift for the first Saturday after the holiday weekend.  

\noindent \textbf{Nov 25 and 26 (Thanksgiving).} On Thanksgiving holiday (Nov 25), we may have expected density to drop since many people in the US are expected to spend more time with their family and less time online. At the same time, online shopping on Black Friday (Nov 26) may increase density. Instead, we find that the density increases significantly on \textit{both} days (see Figure~\ref{fig:ff-preds}.   Specifically, compared to last week, daily density on Nov 26 increased by 18.3\%, out of which 13.5\% is contributed by \qv change and 4.8\% by \ad. How to explain this result? Using the \actcause method, for \qv change, we find that the categories \textit{Health}, \textit{Law and Government} and \textit{Business \& Industrial} are the top-ranked categories. Each contribute more than 2\% of the density increase, leading to a cumulative 7\% increase. From the logs, we see that \qv for these categories decreased as people spent less time on work or health related queries. Since these categories tend to have low density, the daily density increased as a result. On the \ad side, \textit{Online Media \& Ecommerce} contributed nearly 3\% increase in daily density, perhaps due to increased demand for Black Friday shopping. 
Nov 25 exhibits similar patterns for \qv. 

\noindent \textbf{Dec 24 and Dec 25 (Christmas).} On Christmas days too, there is an significant increase in density. Like the Thanksgiving days, health and work-related queries are issued fewer times, leading to an overall increase in daily density (all three categories have attribution scores >1\%). However, we find that the top categories by \qv change are \textit{Hobbies \& Leisure} and \textit{Arts \& Entertainment}. Both these categories experience a surge in their query volume and being high-density categories, cause a 2.1\% and 1.8\% increase in daily density respectively. To explain this, we look at the query logs and find that the rise in \textit{Hobbies \& Leisure} queries is fueled by the \textit{toys \& games} subcategory, which is aligned with the expectation of the holiday days. On Dec 25, \textit{Hobbies \& Leisure} is also the category which has the highest attribution score by \ad (2.7\%). 
Overall, the category contributes 4.8\% increase, nearly one-third of the total density increase on Christmas day, signifying the importance of \textit{toys \& games} subcategory for Christmas.

\section{Discussion and Conclusion}
We presented a counterfactual-based attribution method to explain changes in a large-scale ad system's output metric. Using the computational structure of the system, the method provides attribution scores that are more accurate than prior methods. 


\bibliographystyle{ACM-Reference-Format}
\bibliography{attribution}


\begin{thebibliography}{29}


\ifx \showCODEN    \undefined \def \showCODEN     #1{\unskip}     \fi
\ifx \showDOI      \undefined \def \showDOI       #1{#1}\fi
\ifx \showISBNx    \undefined \def \showISBNx     #1{\unskip}     \fi
\ifx \showISBNxiii \undefined \def \showISBNxiii  #1{\unskip}     \fi
\ifx \showISSN     \undefined \def \showISSN      #1{\unskip}     \fi
\ifx \showLCCN     \undefined \def \showLCCN      #1{\unskip}     \fi
\ifx \shownote     \undefined \def \shownote      #1{#1}          \fi
\ifx \showarticletitle \undefined \def \showarticletitle #1{#1}   \fi
\ifx \showURL      \undefined \def \showURL       {\relax}        \fi
\providecommand\bibfield[2]{#2}
\providecommand\bibinfo[2]{#2}
\providecommand\natexlab[1]{#1}
\providecommand\showeprint[2][]{arXiv:#2}

\bibitem[\protect\citeauthoryear{Attariyan, Chow, and Flinn}{Attariyan
  et~al\mbox{.}}{2012}]%
        {attariyan2012xray}
\bibfield{author}{\bibinfo{person}{Mona Attariyan}, \bibinfo{person}{Michael
  Chow}, {and} \bibinfo{person}{Jason Flinn}.} \bibinfo{year}{2012}\natexlab{}.
\newblock \showarticletitle{X-ray: Automating root-cause diagnosis of
  performance anomalies in production software}. In
  \bibinfo{booktitle}{\emph{10th $\{$USENIX$\}$ Symposium on Operating Systems
  Design and Implementation ($\{$OSDI$\}$ 12)}}.
\newblock


\bibitem[\protect\citeauthoryear{Berman}{Berman}{2018}]%
        {berman2018beyondmta}
\bibfield{author}{\bibinfo{person}{Ron Berman}.}
  \bibinfo{year}{2018}\natexlab{}.
\newblock \showarticletitle{Beyond the last touch: Attribution in online
  advertising}.
\newblock \bibinfo{journal}{\emph{Marketing Science}} \bibinfo{volume}{37},
  \bibinfo{number}{5} (\bibinfo{year}{2018}), \bibinfo{pages}{771--792}.
\newblock


\bibitem[\protect\citeauthoryear{Bottou, Peters, Qui{\~n}onero-Candela,
  Charles, Chickering, Portugaly, Ray, Simard, and Snelson}{Bottou
  et~al\mbox{.}}{2013}]%
        {bottou2013counterfactual}
\bibfield{author}{\bibinfo{person}{L{\'e}on Bottou}, \bibinfo{person}{Jonas
  Peters}, \bibinfo{person}{Joaquin Qui{\~n}onero-Candela},
  \bibinfo{person}{Denis~X Charles}, \bibinfo{person}{D~Max Chickering},
  \bibinfo{person}{Elon Portugaly}, \bibinfo{person}{Dipankar Ray},
  \bibinfo{person}{Patrice Simard}, {and} \bibinfo{person}{Ed Snelson}.}
  \bibinfo{year}{2013}\natexlab{}.
\newblock \showarticletitle{Counterfactual Reasoning and Learning Systems: The
  Example of Computational Advertising.}
\newblock \bibinfo{journal}{\emph{Journal of Machine Learning Research}}
  \bibinfo{volume}{14}, \bibinfo{number}{11} (\bibinfo{year}{2013}).
\newblock


\bibitem[\protect\citeauthoryear{Castro, G{\'o}mez, and Tejada}{Castro
  et~al\mbox{.}}{2009}]%
        {castro2009polynomial}
\bibfield{author}{\bibinfo{person}{Javier Castro}, \bibinfo{person}{Daniel
  G{\'o}mez}, {and} \bibinfo{person}{Juan Tejada}.}
  \bibinfo{year}{2009}\natexlab{}.
\newblock \showarticletitle{Polynomial calculation of the Shapley value based
  on sampling}.
\newblock \bibinfo{journal}{\emph{Computers \& Operations Research}}
  \bibinfo{volume}{36}, \bibinfo{number}{5} (\bibinfo{year}{2009}),
  \bibinfo{pages}{1726--1730}.
\newblock


\bibitem[\protect\citeauthoryear{Dalessandro, Perlich, Stitelman, and
  Provost}{Dalessandro et~al\mbox{.}}{2012}]%
        {dalessandro2012causally}
\bibfield{author}{\bibinfo{person}{Brian Dalessandro}, \bibinfo{person}{Claudia
  Perlich}, \bibinfo{person}{Ori Stitelman}, {and} \bibinfo{person}{Foster
  Provost}.} \bibinfo{year}{2012}\natexlab{}.
\newblock \showarticletitle{Causally motivated attribution for online
  advertising}. In \bibinfo{booktitle}{\emph{Proceedings of the sixth
  international workshop on data mining for online advertising and internet
  economy}}.
\newblock


\bibitem[\protect\citeauthoryear{Dash, Balasubramanian, and Sharma}{Dash
  et~al\mbox{.}}{2022}]%
        {dash2022evaluating}
\bibfield{author}{\bibinfo{person}{Saloni Dash}, \bibinfo{person}{Vineeth~N
  Balasubramanian}, {and} \bibinfo{person}{Amit Sharma}.}
  \bibinfo{year}{2022}\natexlab{}.
\newblock \showarticletitle{Evaluating and mitigating bias in image
  classifiers: A causal perspective using counterfactuals}. In
  \bibinfo{booktitle}{\emph{Proceedings of the IEEE/CVF WACV Conference}}.
  \bibinfo{pages}{915--924}.
\newblock


\bibitem[\protect\citeauthoryear{Efron}{Efron}{2020}]%
        {efron2020prediction}
\bibfield{author}{\bibinfo{person}{Bradley Efron}.}
  \bibinfo{year}{2020}\natexlab{}.
\newblock \showarticletitle{Prediction, estimation, and attribution}.
\newblock \bibinfo{journal}{\emph{International Statistical Review}}
  \bibinfo{volume}{88} (\bibinfo{year}{2020}), \bibinfo{pages}{S28--S59}.
\newblock


\bibitem[\protect\citeauthoryear{Fatima, Wooldridge, and Jennings}{Fatima
  et~al\mbox{.}}{2008}]%
        {fatima2008shapleycompute}
\bibfield{author}{\bibinfo{person}{Shaheen~S Fatima}, \bibinfo{person}{Michael
  Wooldridge}, {and} \bibinfo{person}{Nicholas~R Jennings}.}
  \bibinfo{year}{2008}\natexlab{}.
\newblock \showarticletitle{A linear approximation method for the Shapley
  value}.
\newblock \bibinfo{journal}{\emph{Artificial Intelligence}}
  \bibinfo{volume}{172}, \bibinfo{number}{14} (\bibinfo{year}{2008}).
\newblock


\bibitem[\protect\citeauthoryear{Halpern}{Halpern}{2016}]%
        {halpern2016book}
\bibfield{author}{\bibinfo{person}{Joseph~Y Halpern}.}
  \bibinfo{year}{2016}\natexlab{}.
\newblock \bibinfo{booktitle}{\emph{Actual causality}}.
\newblock \bibinfo{publisher}{MiT Press}.
\newblock


\bibitem[\protect\citeauthoryear{Heskes, Sijben, Bucur, and Claassen}{Heskes
  et~al\mbox{.}}{2020}]%
        {heskes2020causal}
\bibfield{author}{\bibinfo{person}{Tom Heskes}, \bibinfo{person}{Evi Sijben},
  \bibinfo{person}{Ioan~Gabriel Bucur}, {and} \bibinfo{person}{Tom Claassen}.}
  \bibinfo{year}{2020}\natexlab{}.
\newblock \showarticletitle{Causal shapley values: Exploiting causal knowledge
  to explain individual predictions of complex models}.
\newblock \bibinfo{journal}{\emph{Advances in neural information processing
  systems}}  \bibinfo{volume}{33} (\bibinfo{year}{2020}),
  \bibinfo{pages}{4778--4789}.
\newblock


\bibitem[\protect\citeauthoryear{Id{\'e}, Dhurandhar, Navr{\'a}til, Singh, and
  Abe}{Id{\'e} et~al\mbox{.}}{2021}]%
        {ide2021anomaly}
\bibfield{author}{\bibinfo{person}{Tsuyoshi Id{\'e}}, \bibinfo{person}{Amit
  Dhurandhar}, \bibinfo{person}{Ji{\v{r}}{\'\i} Navr{\'a}til},
  \bibinfo{person}{Moninder Singh}, {and} \bibinfo{person}{Naoki Abe}.}
  \bibinfo{year}{2021}\natexlab{}.
\newblock \showarticletitle{Anomaly attribution with likelihood compensation}.
  In \bibinfo{booktitle}{\emph{Proceedings of the AAAI Conference on Artificial
  Intelligence}}, Vol.~\bibinfo{volume}{35}. \bibinfo{pages}{4131--4138}.
\newblock


\bibitem[\protect\citeauthoryear{Janzing, Budhathoki, Minorics, and
  Bl{\"o}baum}{Janzing et~al\mbox{.}}{2019}]%
        {janzing2019causaloutliers}
\bibfield{author}{\bibinfo{person}{Dominik Janzing}, \bibinfo{person}{Kailash
  Budhathoki}, \bibinfo{person}{Lenon Minorics}, {and} \bibinfo{person}{Patrick
  Bl{\"o}baum}.} \bibinfo{year}{2019}\natexlab{}.
\newblock \showarticletitle{Causal structure based root cause analysis of
  outliers}.
\newblock \bibinfo{journal}{\emph{arXiv preprint arXiv:1912.02724}}
  (\bibinfo{year}{2019}).
\newblock


\bibitem[\protect\citeauthoryear{Ji, Wang, and Zhang}{Ji et~al\mbox{.}}{2016}]%
        {ji2016probabilistic-mta}
\bibfield{author}{\bibinfo{person}{Wendi Ji}, \bibinfo{person}{Xiaoling Wang},
  {and} \bibinfo{person}{Dell Zhang}.} \bibinfo{year}{2016}\natexlab{}.
\newblock \showarticletitle{A probabilistic multi-touch attribution model for
  online advertising}. In \bibinfo{booktitle}{\emph{Proceedings of the 25th acm
  international on conference on information and knowledge management}}.
  \bibinfo{pages}{1373--1382}.
\newblock


\bibitem[\protect\citeauthoryear{Jung, Kasiviswanathan, Tian, Janzing,
  Bloebaum, and Bareinboim}{Jung et~al\mbox{.}}{2022}]%
        {pmlr-v162-jung22a}
\bibfield{author}{\bibinfo{person}{Yonghan Jung}, \bibinfo{person}{Shiva
  Kasiviswanathan}, \bibinfo{person}{Jin Tian}, \bibinfo{person}{Dominik
  Janzing}, \bibinfo{person}{Patrick Bloebaum}, {and} \bibinfo{person}{Elias
  Bareinboim}.} \bibinfo{year}{2022}\natexlab{}.
\newblock \showarticletitle{On Measuring Causal Contributions via
  do-interventions}. In \bibinfo{booktitle}{\emph{Proceedings of the 39th
  International Conference on Machine Learning}}
  \emph{(\bibinfo{series}{Proceedings of Machine Learning Research},
  Vol.~\bibinfo{volume}{162})}. \bibinfo{publisher}{PMLR},
  \bibinfo{pages}{10476--10501}.
\newblock


\bibitem[\protect\citeauthoryear{Kommiya~Mothilal, Mahajan, Tan, and
  Sharma}{Kommiya~Mothilal et~al\mbox{.}}{2021}]%
        {kommiya2021towards}
\bibfield{author}{\bibinfo{person}{Ramaravind Kommiya~Mothilal},
  \bibinfo{person}{Divyat Mahajan}, \bibinfo{person}{Chenhao Tan}, {and}
  \bibinfo{person}{Amit Sharma}.} \bibinfo{year}{2021}\natexlab{}.
\newblock \showarticletitle{Towards unifying feature attribution and
  counterfactual explanations: Different means to the same end}. In
  \bibinfo{booktitle}{\emph{Proceedings of the 2021 AAAI/ACM AIES Conference}}.
\newblock


\bibitem[\protect\citeauthoryear{Lim, Ar{\i}k, Loeff, and Pfister}{Lim
  et~al\mbox{.}}{2021}]%
        {lim2021temporal}
\bibfield{author}{\bibinfo{person}{Bryan Lim}, \bibinfo{person}{Sercan~{\"O}
  Ar{\i}k}, \bibinfo{person}{Nicolas Loeff}, {and} \bibinfo{person}{Tomas
  Pfister}.} \bibinfo{year}{2021}\natexlab{}.
\newblock \showarticletitle{Temporal fusion transformers for interpretable
  multi-horizon time series forecasting}.
\newblock \bibinfo{journal}{\emph{International Journal of Forecasting}}
  \bibinfo{volume}{37}, \bibinfo{number}{4} (\bibinfo{year}{2021}),
  \bibinfo{pages}{1748--1764}.
\newblock


\bibitem[\protect\citeauthoryear{Lundberg and Lee}{Lundberg and Lee}{2017}]%
        {lundberg2017unified}
\bibfield{author}{\bibinfo{person}{Scott~M Lundberg} {and}
  \bibinfo{person}{Su-In Lee}.} \bibinfo{year}{2017}\natexlab{}.
\newblock \showarticletitle{A unified approach to interpreting model
  predictions}.
\newblock \bibinfo{journal}{\emph{Advances in neural information processing
  systems}}  \bibinfo{volume}{30} (\bibinfo{year}{2017}).
\newblock


\bibitem[\protect\citeauthoryear{Makridakis, Spiliotis, and
  Assimakopoulos}{Makridakis et~al\mbox{.}}{2020}]%
        {makridakis2020m4}
\bibfield{author}{\bibinfo{person}{Spyros Makridakis},
  \bibinfo{person}{Evangelos Spiliotis}, {and} \bibinfo{person}{Vassilios
  Assimakopoulos}.} \bibinfo{year}{2020}\natexlab{}.
\newblock \showarticletitle{The M4 Competition: 100,000 time series and 61
  forecasting methods}.
\newblock \bibinfo{journal}{\emph{International Journal of Forecasting}}
  \bibinfo{volume}{36}, \bibinfo{number}{1} (\bibinfo{year}{2020}),
  \bibinfo{pages}{54--74}.
\newblock


\bibitem[\protect\citeauthoryear{Nagaraj, Killian, and Neville}{Nagaraj
  et~al\mbox{.}}{2012}]%
        {nagaraj2012structured}
\bibfield{author}{\bibinfo{person}{Karthik Nagaraj}, \bibinfo{person}{Charles
  Killian}, {and} \bibinfo{person}{Jennifer Neville}.}
  \bibinfo{year}{2012}\natexlab{}.
\newblock \showarticletitle{Structured comparative analysis of systems logs to
  diagnose performance problems}. In \bibinfo{booktitle}{\emph{9th
  $\{$USENIX$\}$ Symposium on Networked Systems Design and Implementation
  ($\{$NSDI$\}$ 12)}}.
\newblock


\bibitem[\protect\citeauthoryear{Pearl}{Pearl}{2009}]%
        {pearl2009book}
\bibfield{author}{\bibinfo{person}{Judea Pearl}.}
  \bibinfo{year}{2009}\natexlab{}.
\newblock \bibinfo{booktitle}{\emph{Causality}}.
\newblock \bibinfo{publisher}{Cambridge university press}.
\newblock


\bibitem[\protect\citeauthoryear{Pearl}{Pearl}{2019}]%
        {pearl2019seven}
\bibfield{author}{\bibinfo{person}{Judea Pearl}.}
  \bibinfo{year}{2019}\natexlab{}.
\newblock \showarticletitle{The seven tools of causal inference, with
  reflections on machine learning}.
\newblock \bibinfo{journal}{\emph{Commun. ACM}} \bibinfo{volume}{62},
  \bibinfo{number}{3} (\bibinfo{year}{2019}), \bibinfo{pages}{54--60}.
\newblock


\bibitem[\protect\citeauthoryear{Peters, Janzing, and Sch{\"o}lkopf}{Peters
  et~al\mbox{.}}{2017}]%
        {peters2017elements}
\bibfield{author}{\bibinfo{person}{Jonas Peters}, \bibinfo{person}{Dominik
  Janzing}, {and} \bibinfo{person}{Bernhard Sch{\"o}lkopf}.}
  \bibinfo{year}{2017}\natexlab{}.
\newblock \bibinfo{booktitle}{\emph{Elements of causal inference: foundations
  and learning algorithms}}.
\newblock \bibinfo{publisher}{The MIT Press}.
\newblock


\bibitem[\protect\citeauthoryear{Salinas, Flunkert, Gasthaus, and
  Januschowski}{Salinas et~al\mbox{.}}{2020}]%
        {salinas2020deepar}
\bibfield{author}{\bibinfo{person}{David Salinas}, \bibinfo{person}{Valentin
  Flunkert}, \bibinfo{person}{Jan Gasthaus}, {and} \bibinfo{person}{Tim
  Januschowski}.} \bibinfo{year}{2020}\natexlab{}.
\newblock \showarticletitle{DeepAR: Probabilistic forecasting with
  autoregressive recurrent networks}.
\newblock \bibinfo{journal}{\emph{International Journal of Forecasting}}
  \bibinfo{volume}{36}, \bibinfo{number}{3} (\bibinfo{year}{2020}),
  \bibinfo{pages}{1181--1191}.
\newblock


\bibitem[\protect\citeauthoryear{Shao and Li}{Shao and Li}{2011}]%
        {shao2011mta}
\bibfield{author}{\bibinfo{person}{Xuhui Shao} {and} \bibinfo{person}{Lexin
  Li}.} \bibinfo{year}{2011}\natexlab{}.
\newblock \showarticletitle{Data-driven multi-touch attribution models}. In
  \bibinfo{booktitle}{\emph{Proceedings of the 17th ACM SIGKDD international
  conference on Knowledge discovery and data mining}}.
  \bibinfo{pages}{258--264}.
\newblock


\bibitem[\protect\citeauthoryear{Singal, Besbes, Desir, Goyal, and
  Iyengar}{Singal et~al\mbox{.}}{2022}]%
        {singal2022shapley}
\bibfield{author}{\bibinfo{person}{Raghav Singal}, \bibinfo{person}{Omar
  Besbes}, \bibinfo{person}{Antoine Desir}, \bibinfo{person}{Vineet Goyal},
  {and} \bibinfo{person}{Garud Iyengar}.} \bibinfo{year}{2022}\natexlab{}.
\newblock \showarticletitle{Shapley meets uniform: An axiomatic framework for
  attribution in online advertising}.
\newblock \bibinfo{journal}{\emph{Management Science}} (\bibinfo{year}{2022}).
\newblock


\bibitem[\protect\citeauthoryear{{\v{S}}trumbelj and Kononenko}{{\v{S}}trumbelj
  and Kononenko}{2014}]%
        {vstrumbelj2014explaining}
\bibfield{author}{\bibinfo{person}{Erik {\v{S}}trumbelj} {and}
  \bibinfo{person}{Igor Kononenko}.} \bibinfo{year}{2014}\natexlab{}.
\newblock \showarticletitle{Explaining prediction models and individual
  predictions with feature contributions}.
\newblock \bibinfo{journal}{\emph{Knowledge and information systems}}
  \bibinfo{volume}{41}, \bibinfo{number}{3} (\bibinfo{year}{2014}),
  \bibinfo{pages}{647--665}.
\newblock


\bibitem[\protect\citeauthoryear{Verma, Dickerson, and Hines}{Verma
  et~al\mbox{.}}{2020}]%
        {verma2020counterfactual}
\bibfield{author}{\bibinfo{person}{Sahil Verma}, \bibinfo{person}{John
  Dickerson}, {and} \bibinfo{person}{Keegan Hines}.}
  \bibinfo{year}{2020}\natexlab{}.
\newblock \showarticletitle{Counterfactual explanations for machine learning: A
  review}.
\newblock \bibinfo{journal}{\emph{arXiv preprint arXiv:2010.10596}}
  (\bibinfo{year}{2020}).
\newblock


\bibitem[\protect\citeauthoryear{Yamamoto}{Yamamoto}{2012}]%
        {yamamoto2012understanding}
\bibfield{author}{\bibinfo{person}{Teppei Yamamoto}.}
  \bibinfo{year}{2012}\natexlab{}.
\newblock \showarticletitle{Understanding the past: Statistical analysis of
  causal attribution}.
\newblock \bibinfo{journal}{\emph{American Journal of Political Science}}
  \bibinfo{volume}{56}, \bibinfo{number}{1} (\bibinfo{year}{2012}),
  \bibinfo{pages}{237--256}.
\newblock


\bibitem[\protect\citeauthoryear{Zhang, Rodrigues, Luo, Stumm, and Yuan}{Zhang
  et~al\mbox{.}}{2019}]%
        {zhang2019inflection}
\bibfield{author}{\bibinfo{person}{Yongle Zhang}, \bibinfo{person}{Kirk
  Rodrigues}, \bibinfo{person}{Yu Luo}, \bibinfo{person}{Michael Stumm}, {and}
  \bibinfo{person}{Ding Yuan}.} \bibinfo{year}{2019}\natexlab{}.
\newblock \showarticletitle{The inflection point hypothesis: a principled
  debugging approach for locating the root cause of a failure}. In
  \bibinfo{booktitle}{\emph{Proceedings of the 27th ACM Symposium on Operating
  Systems Principles}}. \bibinfo{pages}{131--146}.
\newblock


\end{thebibliography}

\end{document}
\endinput
